\title{Corrected Kriging update formulae for batch-sequential data assimilation}
\author{Cl\'ement Chevalier\footnote{Institut de Radioprotection et de S\^uret\'e Nucl\'eaire (IRSN), Fontenay-aux-Roses, France}, \\
David Ginsbourger\footnote{IMSV, Department of Mathematics and Statistics, University of Berne, Switzerland} 
}
\date{March 28th, 2012} 
\newcommand{\esp}{\mathbb{E}}									
\newcommand{\var}{\mathnormal{Var}}						
\newcommand{\zhat}[2]{\hat{Z}(#2)_{#1\ \text{data}}}
\newcommand{\sigmahatsquare}[2]{\sigma^2_{#1}(#2)}
\newcommand \trans {^\top}
\newtheorem{prop}{Proposition}
\newtheorem{cor}{Corollary}
\newcommand{\x}{\mathbf{x}}	
\newcommand{\y}{\mathbf{y}}
\newcommand{\Xn}{\mathbf{X}_{\text{new}}}
\newcommand{\Xo}{\mathbf{X}_{\text{old}}}
\newcommand{\zn}{\mathbf{Z}_{\text{new}}}
\newcommand{\zo}{\mathbf{Z}_{\text{old}}}
\newcommand{\mn}[1]{\hat{Z}(#1)_{\text{new}}}
\newcommand{\mo}[1]{\hat{Z}(#1)_{\text{old}}}
\newcommand{\vn}{\sigma^{2}_{\text{new}}}
\newcommand{\vo}{\sigma^{2}_{\text{old}}}
\newcommand{\Kn}{\Sigma_{\text{new}}}
\newcommand{\wnn}{\boldsymbol{\lambda}_{\text{new,new}}}	
\newcommand{\wno}{\boldsymbol{\lambda}_{\text{new,old}}}	
\newcommand{\ko}{\sigma_{\text{old}}}
\newcommand{\kn}{\sigma_{\text{new}}}
\begin{document}
\maketitle

\section{Introduction}

Recently, a lot of effort has been paid to the efficient computation of Kriging predictors when observations are assimilated sequentially. 
In particular, Kriging update formulae enabling significant computational savings were derived in \cite{barnes1992efficient, gao:1996:update,emery:2009:update}. 
Taking advantage of the previous Kriging mean and variance calculations helps avoiding a costly $(n+1) \times (n+1)$ matrix inversion when adding one observation to the $n$ already available ones. 
In addition to traditional update formulae taking into account a single new observation, \cite{emery:2009:update} also proposed formulae for the batch-sequential case, i.e. when $r > 1$ new observations are simultaneously assimilated.
However, the Kriging variance and covariance formulae given without proof in \cite{emery:2009:update} for the batch-sequential case are not correct. 
In this paper we fix this issue and establish corrected expressions for updated Kriging variances and covariances when assimilating several observations in parallel. 


\section{Kriging update formulae for the parallel case}
\subsection{notations and formulae}

In this section we give a counter-example for the Kriging variance update formula proposed in \cite{emery:2009:update}. 
Let us first introduce the notations of this paper and recall these formulae. 
When $n$ observations, at locations $\x_1,\hdots,\x_n \in D$, of a real valued random field $Z$ indexed by $D\subset \mathbb{R}^{d}$ are available,  
\cite{emery:2009:update} denotes respectively by $\zhat{n}{\x}$ and $\sigmahatsquare{n}{\x}$ the Kriging mean and variance at $\x \in D$ based  
on these $n$ observations. 
The corresponding Kriging covariance is denoted by $\sigma_{n}: (\x,\mathbf{y}) \in D^2 \to \sigma_{n}(\x,\mathbf{y})$.
If $k > 1$ new observations, at points $\x_{n+1},\hdots,\x_{n+k}$, are available, the parallel Kriging update formulae given in \cite{emery:2009:update} follow:
\begin{align}  
\label{update:emerymean}
\zhat{n+k}{\x}  \;=&\; \zhat{n}{\x} \;+\; 
\sum_{i=1}^k \lambda_{n+i|n+k}(\x) 
\left( Z(\x_{n+i}) - \zhat{n}{\x_{n+i}}  \right), \\
\label{update:emeryvar}
\sigmahatsquare{n+k}{\x}  \;=&\; \sigmahatsquare{n}{\x} \;-\; 
\sum_{i=1}^k \lambda^2_{n+i|n+k}(\x) 
\sigmahatsquare{n}{\x_{n+i}}\ , \\
\label{update:emerycov}
\sigma_{n+k}(\x,\mathbf{y})  \;=&\; \sigma_{n}(\x,\mathbf{y}) \;-\; 
\sum_{i=1}^k \lambda_{n+i|n+k}(\x)\lambda_{n+i|n+k}(\mathbf{y}) 
\sigmahatsquare{n}{\x_{n+i}}\ 
\end{align}
where $\lambda_{n+i|n+k}(\x)$ denotes the Kriging weight of $Z(\x_{n+i})$ when predicting $Z(\x)$ relying on $n+k$ observations. 
In \cite{emery:2009:update}, these formulae are proven only for $k = 1$.
For  $k > 1$, they are quickly justified with an argument based on Pythagoras' theorem. 
In fact, the assumptions for using this theorem are not satisfied, as we will detail here. 
We now give a counter-example for Equation~(\ref{update:emeryvar}) 
and then propose fixed expressions. 

\subsection{A counterexample for Equation~(\ref{update:emeryvar}) }

\noindent
Let us consider for the sake of simplicity a case where $Z$ is a one-dimensional centered random process indexed by $D=[0,1]$, 
with covariance kernel $C(x,y):=\sigma_{0}(x,y)=\min(x,y)$ (Wiener process, or \textit{Brownian Motion}). Note that even though this 
process is non-stationary, Simple Kriging equations are applicable to it (Written in the general case of a non-stationary 
covariance kernel, as allowed in \cite{emery:2009:update}'s results and notations). Assuming $n = 0$ initial observations and $k = 2$ 
new observations at the points $x_1 = \frac{1}{2}$,  $x_2 = 1$, the Kriging weights for a prediction at $x = \frac{3}{4}$ write:
\begin{align*}
(\lambda_{1|2}(x),\lambda_{2|2}(x)) =& \ (C(x,x_1),C(x,x_2) ) 
\begin{pmatrix}
      C(x_1,x_1) & C(x_1,x_2)\\
      C(x_2,x_1) & C(x_2,x_2)\\
    \end{pmatrix}^{-1}
= \ \left(\frac{1}{2},\frac{1}{2}\right)
\end{align*}
leading to a Kriging mean $\zhat{n+k}{\x}=\frac{1}{2} Z(x_1) + \frac{1}{2} Z(x_2)$, and to a Kriging variance: 
\begin{align*}
\sigmahatsquare{2}{x} = C(x,x) - (\lambda_{1|2}(x),\lambda_{2|2}(x)) 
\begin{pmatrix}
      C(x_1,x_1) & C(x_1,x_2)\\
      C(x_2,x_1) & C(x_2,x_2)\\
    \end{pmatrix}
(\lambda_{1|2}(x),\lambda_{2|2}(x))^{T} = \frac{1}{8}
\end{align*}
\noindent
Now, using Eq.~(\ref{update:emeryvar}) would lead to 
\begin{align*}
\sigmahatsquare{2}{x} 
&= \sigmahatsquare{0}{x} - \lambda^2_{1|2}(x) \sigmahatsquare{0}{x_1} - \lambda^2_{2|2}(x) \sigmahatsquare{0}{x_2} \\
&= \frac{3}{4} - \frac{1}{4}\frac{1}{2} -\frac{1}{4} = \frac{3}{8}
\end{align*}
\noindent
and we see that the two expressions do not match. More precisely, the conditional variance obtained by using Eq.~(\ref{update:emeryvar}) 
is too large, as if the contribution of $x_1$ and $x_2$ to decreasing the variance would have been underestimated. As we will see 
now, this is indeed the case, and the missing part is related to conditional covariances. 

\subsection{Corrected Kriging variance and covariance update formulae}


In this section, we establish corrected update formulae that may be used in the batch-sequential case instead of the formulae~(\ref{update:emeryvar}) and~(\ref{update:emerycov}). 
To improve the readability of the properties and their proofs, we adopt the following simplified notations: 

\begin{itemize}
	\item $\Xo := \{\x_{1}, \ldots, \x_{n}\}$, and $\Xn := \{\x_{n+1}, \ldots, \x_{n+k}\}$,
	\item $\zo:=( Z(\x_{1}), \ldots, Z(\x_{n}) )$, and $\zn:=( Z(\x_{n+1}), \ldots, Z(\x_{n+k}) )$, 
	\item $\wno(\x) := (\lambda_{1|n+k}(\x), \ldots, \lambda_{n|n+k}(\x))^{T}$,
	\item $\wnn(\x) := (\lambda_{n+1|n+k}(\x), \ldots, \lambda_{n+k|n+k}(\x))^{T}$, 
	\item $\vo(\x):=\sigmahatsquare{n}{\x}$, $\vn(\x):=\sigmahatsquare{n+k}{\x}$, and similarly for the conditional covariances.
	\item The conditional covariance matrix of $\zn$ knowing $\zo$ is denoted by $\Kn$,
\end{itemize}

\noindent
For conciseness and coherence, $\zhat{n}{\x}$ and $\zhat{n+k}{\x}$ are also denoted by $\mo{\x}$ and $\mn{\x}$, respectively. 
\noindent
The corrected update formulae are given below: 


\begin{prop} (Corrected Kriging update equations for the parallel case)
\begin{align}
\label{corrected_m}
\mn{\x} & = \mo{\x} + \wnn(\x)^{T} ( \zn - \mo{ \Xn } ) \\
\label{corrected_v}
\vn(\x) & = \vo(\x) - \wnn(\x)^{T} \Kn \wnn(\x) \\
\label{corrected_c}
\kn(\x,\y) & = \ko(\x,\y) - \wnn(\x)^{T} \Kn \wnn(\y)
\end{align}
\end{prop}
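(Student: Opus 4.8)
The plan is to view Simple Kriging as orthogonal projection in the Hilbert space $L^{2}$ of square-integrable random variables, and to treat the batch update as a single further projection onto the linear span of the $k$ new Kriging residuals. Assume $Z$ is centered (otherwise subtract the known mean) and set $H_{\text{old}} := \operatorname{span}(Z(\x_{1}),\ldots,Z(\x_{n}))$ and $H_{\text{new}} := \operatorname{span}(Z(\x_{1}),\ldots,Z(\x_{n+k}))$, both finite-dimensional, so that $\mo{\x}$ and $\mn{\x}$ are the orthogonal projections of $Z(\x)$ onto $H_{\text{old}}$ and $H_{\text{new}}$. For $i = 1,\ldots,k$ let $R_{i} := Z(\x_{n+i}) - \mo{\x_{n+i}}$ be the $i$-th new Kriging residual. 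The first step is to record the orthogonal decomposition $H_{\text{new}} = H_{\text{old}} \oplus \operatorname{span}(R_{1},\ldots,R_{k})$: each $R_{i}$ lies in $H_{\text{new}}$, each $R_{i}$ is orthogonal to $H_{\text{old}}$, and $\{Z(\x_{1}),\ldots,Z(\x_{n}),R_{1},\ldots,R_{k}\}$ generates $H_{\text{new}}$.

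The second step is to project $Z(\x)$ along this sum, giving $\mn{\x} = \mo{\x} + P(\x)$ where $P(\x)$ is the projection of $Z(\x)$ --- equivalently of the residual $Z(\x) - \mo{\x}$, since the $R_{i}$ are orthogonal to $H_{\text{old}}$ --- onto $\operatorname{span}(R_{1},\ldots,R_{k})$. The normal equations for $P(\x)$ have Gram matrix $\bigl( \cov(R_{i},R_{j}) \bigr)_{i,j} = \bigl( \ko(\x_{n+i},\x_{n+j}) \bigr)_{i,j} = \Kn$ and right-hand side the vector $\mathbf{c}(\x)$ with entries $\cov(Z(\x) - \mo{\x}, R_{i}) = \ko(\x,\x_{n+i})$, so (assuming $\Kn$ invertible, the degenerate case being handled by any solution of the normal equations and leaving the formulae below unchanged) $P(\x) = \mathbf{c}(\x)\trans \Kn^{-1} (R_{1},\ldots,R_{k})\trans$. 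Re-expressing $P(\x)$ in terms of $Z(\x_{1}),\ldots,Z(\x_{n+k})$ and reading off the coefficient of $Z(\x_{n+i})$ --- which is well defined by uniqueness of the Kriging representation --- identifies it with $\lambda_{n+i|n+k}(\x)$, that is $\wnn(\x) = \Kn^{-1}\mathbf{c}(\x)$, or $\mathbf{c}(\x) = \Kn\wnn(\x)$. This already gives (\ref{corrected_m}), and I expect this identification to be the main obstacle: it is exactly where the Pythagorean shortcut of \cite{emery:2009:update} breaks down, since the residuals $R_{i}$ are not mutually orthogonal in general, so $\Kn$ is not diagonal and $\|P(\x)\|^{2} \neq \sum_{i} \lambda_{n+i|n+k}^{2}(\x)\,\ko(\x_{n+i},\x_{n+i})$.

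For the third step, orthogonality of Kriging residuals to the conditioning space yields the identities $\cov(Z(\x),Z(\y)) = \cov(\mn{\x},\mn{\y}) + \kn(\x,\y)$ and, likewise, $\cov(Z(\x),Z(\y)) = \cov(\mo{\x},\mo{\y}) + \ko(\x,\y)$; subtracting gives $\kn(\x,\y) = \ko(\x,\y) - \bigl( \cov(\mn{\x},\mn{\y}) - \cov(\mo{\x},\mo{\y}) \bigr)$. Expanding $\mn{\cdot} = \mo{\cdot} + P(\cdot)$ and using that $P(\cdot) \in \operatorname{span}(R_{1},\ldots,R_{k})$, which is orthogonal to $H_{\text{old}} \ni \mo{\cdot}$, the two cross terms vanish and the difference equals $\cov(P(\x),P(\y)) = \mathbf{c}(\x)\trans \Kn^{-1} \mathbf{c}(\y) = \wnn(\x)\trans \Kn \wnn(\y)$ after substituting $\mathbf{c} = \Kn\wnn$. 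This is (\ref{corrected_c}), and setting $\y = \x$ gives (\ref{corrected_v}). A shorter but less general alternative, valid when $Z$ is Gaussian, is to observe that conditionally on $\zo$ the vector $(Z(\x),Z(\y),\zn)$ is Gaussian with mean $(\mo{\x},\mo{\y},\mo{\Xn})$, cross-covariances $\cov(Z(\x),\zn\mid\zo) = \mathbf{c}(\x)\trans$ and covariance $\var(\zn\mid\zo) = \Kn$, and to apply the standard Gaussian conditioning formulae a second time, conditioning further on $\zn$; this reproduces the three identities after the same substitution $\mathbf{c}(\cdot) = \Kn\wnn(\cdot)$.
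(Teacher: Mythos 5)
Your proof is correct, but it takes a genuinely different route from the paper's. The paper works under a Gaussian assumption, identifies Kriging with conditioning, and obtains (\ref{corrected_m})--(\ref{corrected_c}) in a few lines from the laws of total expectation, total variance and total covariance applied to the tower $\zo \subset (\zo,\zn)$; it then remarks separately that the formulae persist for general square-integrable fields because the weights depend only on covariances, and it proves the identity $\Kn\wnn(\x)=\ko(\Xn,\x)$ as a distinct proposition via the residual decomposition $Z(\x)=\wno(\x)\trans\zo+\wnn(\x)\trans\zn+\varepsilon$. You instead stay entirely in $L^{2}$: you decompose the new prediction space as the orthogonal direct sum of the old space and the span of the new residuals $R_{i}=Z(\x_{n+i})-\mo{\x_{n+i}}$, solve the normal equations on the residual block, and identify the resulting coefficients with $\wnn(\x)$ by uniqueness of the representation in the $Z(\x_{j})$'s. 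This buys two things: the Gaussian assumption is never needed (you get the general square-integrable case directly, rather than by the paper's a posteriori remark), and the paper's Proposition 2, $\Kn\wnn(\x)=\ko(\Xn,\x)$, together with the Corollary's forms (\ref{corrected_m_bis})--(\ref{corrected_c_bis}), drop out as byproducts rather than requiring a separate argument. What the paper's route buys in exchange is brevity and transparency for (\ref{corrected_v}): the law of total variance makes the corrective term $\wnn(\x)\trans\Kn\wnn(\x)$ appear immediately as $\var[\esp[Z(\x)\mid\zo,\zn]\mid\zo]$, whereas you reach it through the Gram-matrix computation $\cov(P(\x),P(\y))=\mathbf{c}(\x)\trans\Kn^{-1}\mathbf{c}(\y)$. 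Two small points worth making explicit if you write this up: the identification of the residual Gram matrix $\bigl(\cov(R_{i},R_{j})\bigr)_{i,j}$ with the conditional covariance matrix $\Kn$ is a definitional convention in the non-Gaussian $L^{2}$ setting (Kriging covariance equals residual covariance), and the uniqueness step for reading off $\lambda_{n+i|n+k}(\x)$ requires the full covariance matrix of $(\zo,\zn)$ to be nonsingular, the same standing assumption the paper makes implicitly.
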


\noindent
The proofs above use a Gaussian assumption on the field $Z$, enabling a convenient interpretation of the 
Kriging mean and variance in terms of conditional expectation and variance. 
A key property is that even though the conditional distribution interpretation doesn't hold in non-Gaussian cases, 
the formulae for the Kriging weights remain valid whatever the assumed square-integrable distribution for the field; 
we are just using here that best linear prediction and conditioning coincide in the Gaussian case. 
Furthermore, the results hold as well for the cases of Ordinary and Universal Kriging (written in terms of covariances, 
for a square-integrable field, not with variograms), relying on their well-known Bayesian construction with an improper 
prior on the trend coefficient(s). 

\begin{proof} The first equation follows from an application of the law of total expectation:
\begin{equation*}
\begin{split}
\mo{\x} &= \esp[ Z(\x) | \zo ]\\
&= \esp[ \esp[ Z(\x) | \zo, \zn ] | \zo ] \\
&= \esp[ \wno(\x)^T\zo + \wnn(\x)^T\zn ] | \zo ] \\
&= \wno(\x)^T\zo + \wnn(\x)^T \mo{\Xn} \\
&= \wno(\x)^T\zo + \wnn(\x)^T\zn - \wnn(\x)^T\zn + \wnn(\x)^T \mo{\Xn} \\
&= \mn{\x} - \wnn(\x)^T( \zn - \mo{\Xn} )
\end{split}
\end{equation*}
\noindent
Similarly, using the law of total variance delivers:
\begin{equation*}
\begin{split}
\vo(\x) &= \var[ Z(\x) | \zo ]\\
&= \esp[ \var[ Z(\x) | \zo, \zn ] | \zo ] +  \var[ \esp[ Z(\x) | \zo, \zn ] | \zo ] \\
&= \var[ Z(\x) | \zo, \zn ] + \var[ \wno(\x)^{T} \zo  + \wnn(\x)^{T} \zn  | \zo ] \\
&= \vn(\x) +  \wnn(\x)^{T}\Kn \wnn(\x)
\end{split}
\end{equation*}
which proves the second equation. The third equation comes with the same method, using the law of total covariance.
\end{proof}

\begin{prop} (Kriging weights expressed in terms of conditional covariance)
\label{neweight}
\begin{align}
\Kn \wnn(\x) = \ko(\Xn, \x) 
\end{align}
\end{prop}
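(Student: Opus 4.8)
My plan is to identify the Kriging weights $\wnn(\x)$ as coefficients in the best linear predictor of $Z(\x)$ based on the full data $(\zo,\zn)$, and then exploit the fact that, in the Gaussian setting used throughout, best linear prediction coincides with conditional expectation. Concretely, I would start from $\mn{\x}=\esp[Z(\x)\mid\zo,\zn]=\wno(\x)^T\zo+\wnn(\x)^T\zn$ and use the normal equations characterizing these weights: the prediction error $Z(\x)-\mn{\x}$ must be uncorrelated with every component of $\zo$ and of $\zn$. The orthogonality against $\zo$ says nothing new here; the orthogonality against $\zn$ is what I want to turn into the stated identity.

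The key step is to rewrite everything \emph{conditionally on $\zo$}. Subtracting the $\zo$-conditional means, set $\widetilde{Z}(\x):=Z(\x)-\mo{\x}$ and $\widetilde{\zn}:=\zn-\mo{\Xn}$. From Equation~(\ref{corrected_m}) we have $\mn{\x}-\mo{\x}=\wnn(\x)^T\widetilde{\zn}$, so $\wnn(\x)$ is precisely the vector of coefficients in the regression of $\widetilde{Z}(\x)$ on $\widetilde{\zn}$. The normal equations for that regression read $\cov(\widetilde{\zn},\widetilde{Z}(\x))=\cov(\widetilde{\zn},\widetilde{\zn})\,\wnn(\x)$. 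Now $\cov(\widetilde{\zn},\widetilde{\zn})$ is by definition the conditional covariance matrix of $\zn$ given $\zo$, i.e. $\Kn$, and $\cov(\widetilde{\zn},\widetilde{Z}(\x))$ is the vector of conditional covariances $\ko(\Xn,\x)$; this yields $\Kn\wnn(\x)=\ko(\Xn,\x)$ exactly.

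To make the middle step rigorous I would invoke the law of total covariance in the form $\cov(Z(\x_{n+i}),Z(\x))=\cov\big(\esp[Z(\x_{n+i})\mid\zo],\esp[Z(\x)\mid\zo]\big)+\esp\big[\cov(Z(\x_{n+i}),Z(\x)\mid\zo)\big]$, and similarly for the entries of the covariance matrix of $\zn$; combined with the defining orthogonality relations $\cov(Z(\x)-\wno(\x)^T\zo-\wnn(\x)^T\zn,\,Z(\x_{n+j}))=0$ for $j=1,\dots,k$, the $\zo$-conditional mean contributions cancel and what survives is the claimed matrix identity. The main obstacle is bookkeeping rather than depth: one must be careful that the weights appearing here are the \emph{$(n+k)$-data} Kriging weights (so the orthogonality is against the full vector $(\zo,\zn)$, not just $\zn$), and that "conditional covariance" is used consistently in the Gaussian sense that makes it deterministic — the same convention already adopted in the proof of the previous Proposition. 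Once that is pinned down, the identity drops out of the normal equations with essentially no computation.
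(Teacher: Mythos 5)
Your proposal is correct and follows essentially the same route as the paper: both arguments rest on writing $Z(\x)=\wno(\x)^{T}\zo+\wnn(\x)^{T}\zn+\varepsilon$ with the prediction error $\varepsilon$ orthogonal to all the data, and then taking the $\zo$-conditional covariance with $\zn$ to read off $\Kn\wnn(\x)=\ko(\Xn,\x)$. Your repackaging via the normal equations for the regression of the $\zo$-centered residuals $\widetilde{Z}(\x)$ on $\zn-\mo{\Xn}$ is the same computation, with the law of total covariance playing the role that bilinearity of $\cov(\cdot,\cdot\mid\zo)$ plays in the paper's version.
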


\begin{cor} (Corrected Kriging update equations in terms of conditional covariance)
\begin{align}
\label{corrected_m_bis}
\mn{\x} & = \mo{\x} + \ko(\Xn, \x)^{T} \Kn^{-1} ( \zn - \mo{ \Xn } ) \\
\label{corrected_v_bis}
\vn(\x) & = \vo(\x) - \ko(\Xn, \x)^{T} \Kn^{-1} \ko(\Xn, \x) \\
\label{corrected_c_bis}
\kn(\x,\y) & = \ko(\x,\y) - \ko(\Xn, \x)^{T} \Kn^{-1} \ko(\Xn, \y) 
\end{align}
\end{cor}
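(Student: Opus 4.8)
The plan is to derive the Corollary directly from the Proposition giving the corrected update formulae (Equations~(\ref{corrected_m})--(\ref{corrected_c})) together with Proposition~\ref{neweight}, by simply substituting the latter into the former. First I would observe that the conditional covariance matrix $\Kn$ of $\zn$ knowing $\zo$ is symmetric positive definite under the standing non-degeneracy assumptions, hence invertible; Proposition~\ref{neweight} then yields the closed form $\wnn(\x) = \Kn^{-1} \ko(\Xn, \x)$ for the new-point Kriging weights, and likewise $\wnn(\y) = \Kn^{-1} \ko(\Xn, \y)$.

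Next I would plug these expressions into each of the three corrected update equations in turn. For the mean~(\ref{corrected_m}), $\wnn(\x)^{T}(\zn - \mo{\Xn})$ becomes $\ko(\Xn,\x)^{T} \Kn^{-1}(\zn - \mo{\Xn})$, using $(\Kn^{-1})^{T} = \Kn^{-1}$ by symmetry, which is exactly~(\ref{corrected_m_bis}). For the variance~(\ref{corrected_v}), $\wnn(\x)^{T}\Kn\wnn(\x)$ becomes $\ko(\Xn,\x)^{T}\Kn^{-1}\Kn\Kn^{-1}\ko(\Xn,\x) = \ko(\Xn,\x)^{T}\Kn^{-1}\ko(\Xn,\x)$, giving~(\ref{corrected_v_bis}). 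The covariance case~(\ref{corrected_c}) is identical with $\y$ in place of the second $\x$, producing~(\ref{corrected_c_bis}). Each substitution is a one-line matrix manipulation, so I would present the three together compactly.

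There is no real obstacle here: the content of the Corollary is entirely contained in the two preceding results, and the only thing to be careful about is the symmetry of $\Kn^{-1}$ (so that $\wnn(\x)^{T} = \ko(\Xn,\x)^{T}\Kn^{-1}$) and the cancellation $\Kn^{-1}\Kn = I$ in the quadratic form. If anything merits a remark, it is that Proposition~\ref{neweight} must itself be invoked — i.e. one should note that the Kriging weights of the new observations, computed with all $n+k$ data, indeed solve the small $k\times k$ system with matrix $\Kn$ and right-hand side $\ko(\Xn,\x)$; but since that is precisely the statement of Proposition~\ref{neweight}, which we are entitled to assume, the proof of the Corollary reduces to the substitution just described.

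\begin{proof}
By Proposition~\ref{neweight}, and since $\Kn$ is symmetric positive definite (hence invertible), the Kriging weights of the new observations satisfy $\wnn(\x) = \Kn^{-1} \ko(\Xn, \x)$, and likewise $\wnn(\y) = \Kn^{-1} \ko(\Xn, \y)$. Substituting these into~(\ref{corrected_m}) and using the symmetry of $\Kn^{-1}$ gives
\begin{equation*}
\mn{\x} = \mo{\x} + \bigl(\Kn^{-1} \ko(\Xn,\x)\bigr)^{T} ( \zn - \mo{\Xn} ) = \mo{\x} + \ko(\Xn, \x)^{T} \Kn^{-1} ( \zn - \mo{\Xn} ),
\end{equation*}
which is~(\ref{corrected_m_bis}). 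Substituting into~(\ref{corrected_v}) and simplifying $\Kn^{-1}\Kn\Kn^{-1} = \Kn^{-1}$ yields
\begin{equation*}
\vn(\x) = \vo(\x) - \ko(\Xn,\x)^{T}\Kn^{-1}\Kn\Kn^{-1}\ko(\Xn,\x) = \vo(\x) - \ko(\Xn, \x)^{T} \Kn^{-1} \ko(\Xn, \x),
\end{equation*}
which is~(\ref{corrected_v_bis}). The same computation with $\wnn(\y) = \Kn^{-1}\ko(\Xn,\y)$ applied to~(\ref{corrected_c}) gives~(\ref{corrected_c_bis}).
\end{proof}
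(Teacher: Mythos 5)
Your proof is correct and follows exactly the route the paper takes: the paper likewise obtains the Corollary by plugging Proposition~\ref{neweight}, i.e. $\wnn(\x) = \Kn^{-1}\ko(\Xn,\x)$, into Eqs.~(\ref{corrected_m})--(\ref{corrected_c}). Your added remarks on the invertibility and symmetry of $\Kn$ only make explicit what the paper leaves implicit.
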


\begin{proof}
Using the orthogonal projection interpretation of the conditional expectation, 
\begin{align*}
Z(\x) &= \esp(Z(\x)|\zo, \zn) + \overbrace{Z(\x) - \esp(Z(\x)|\zo, \zn)}^{=: \varepsilon}
\\
 &= \wno(\x)\trans \zo + \wnn(\x)\trans \zn + \varepsilon \ ,
\end{align*}
with $\varepsilon$ centered, and independent of $\zo$ and $\zn$. Let us now calculate the conditional covariance between $Z(\x)$ and $\zn$ knowing the observations $\zo$:
\begin{align*}
\ko(\Xn, \x):=& \  cov(\zn , Z(\x) |\zo )\\
=& \ 0 + cov\left(\zn, \wnn(\x) \trans \zn   \bigm|\zo\right) + cov( \zn, \varepsilon  | \zo) \\
=&  \Kn \wnn(\x) + cov(\varepsilon , \zn | \zo)
\end{align*}
\noindent
Noting that $cov(\zn, \varepsilon | \zo) = \mathbf{0}$, the latter equation proves Proposition~\ref{neweight}. 
Eqs.~\ref{corrected_m_bis}, \ref{corrected_v_bis}, and \ref{corrected_v_bis} of the corollary directly follow by plugging in 
Eq.~\ref{neweight} into Eqs.~\ref{corrected_m}, \ref{corrected_v}, \ref{corrected_c}. 
\end{proof}

\subsection{Interpretation on the counter-example}

The main difference between Equations~\ref{update:emeryvar} and \ref{corrected_v} is that Eq.~\ref{corrected_v} takes into account conditional Kriging covariances. 
Coming back to the counter example, Eq.~\ref{corrected_v} delivers:
\begin{align*}
\sigmahatsquare{2}{x} 
&= \sigmahatsquare{0}{x} - \lambda^2_{1|2}(x) \sigmahatsquare{0}{x_1} - \lambda^2_{2|2}(x) \sigmahatsquare{0}{x_2} 
\mathbf{-2\lambda_{1|2}(x)\lambda_{2|2}(x)\sigma_{0}(x_1,x_2}) \\
&= \frac{3}{4} - \frac{1}{4}\frac{1}{2} -\frac{1}{4} -\mathbf{ 2 \frac{1}{4} \frac{1}{2}} 
= {\frac{1}{8}}
\end{align*}
which is the correct variance, as obtained earlier with regular Simple Kriging equations. 
Note that Eq.~\ref{update:emeryvar} would be correct if the matrix $\Kn$ were diagonal, which has no particular reason to happen in practical applications. 
%
%
The Pytagorean theorem invoked in \cite{emery:2009:update}, enabling to decompose a variance into a sum of (weighted) variances, cannot hence be applied in the general case 
because of non-independence between Kriging residuals.



\section{Conclusion}


In this paper we derived corrected Kriging update formulae, fixing the incorrect formulae~\ref{update:emeryvar} and \ref{update:emerycov} given in \cite{emery:2009:update}, where 
a part of the update involving conditional covariances had been neglected. 
A simple interpretation of the new update formulae is that they correspond to Simple Kriging equations, where the underling covariance kernel would be the conditional covariance kernel before update, i.e $\ko(.,.)$. These formulae enable important computational savings (avoiding a large matrix inversion) and may perfectly be adapted to further frameworks such as co-Kriging.


\bigskip
\noindent
\textbf{Acknowledgements:} The authors would like to thank Dr. Julien Bect (Ecole Sup\'erieure d'Electricit\'e) for drawing their attention to the conditional covariance interpretation of Kriging weights in the non-parallel case. Cl\'ement Chevalier acknowledges support from IRSN and the \href{http://www.redice-project.org/doku.php}{ReDICE consortium}. David Ginsbourger 
acknowledges support from the IMSV, University of Berne.

\end{document}